\def\@fs@pre{\hrule height.8pt depth0pt \kern2pt}}
\def\@fs@pre{\vspace*{5pt}\hrule height.8pt depth0pt \kern2pt}}
\newtheorem{theorem}{Theorem}
\newcommand{\ie}{\emph{i.e.}}
\title{\LARGE \bf Tree Search Techniques for \\Minimizing Detectability and Maximizing Visibility}
\author{Zhongshun Zhang$^{1}$, Joseph Lee$^{2}$, Jonathon M. Smereka$^{2}$, Yoonchang Sung$^{1}$, Lifeng Zhou$^{1}$, and Pratap Tokekar$^{1}$%
\thanks{$^{1}$Zhongshun Zhang, Yoonchang Sung, Lifeng Zhou, and Pratap Tokekar are with the Department of Electrical \& Computer Engineering, Virginia Tech, USA {\tt\small \{zszhang, yooncs8, lfzhou,  tokekar\}}@vt.edu}%
\thanks{$^{2}$Joseph Lee and Jonathon M. Smereka are with U.S. Army TARDEC Ground Vehicle Robotics, Warren, MI 48397 USA {\tt\small \{joseph.s.lee34.civ, jonathon.m.smereka.civ\}@mail.mil}}%
}
\begin{document}

\maketitle
\thispagestyle{empty}
\pagestyle{empty}

\begin{abstract}
We introduce and study the problem of planning a trajectory for an agent to carry out a scouting mission while avoiding being detected by an adversarial guard. This introduces a multi-objective version of classical visibility-based target search and pursuit-evasion problem. In our formulation, the agent receives a positive reward for increasing its visibility (by exploring new regions) and a negative penalty every time it is detected by the guard. The objective is to find a finite-horizon path for the agent that balances the trade off between maximizing visibility and minimizing detectability. 

We model this problem as a discrete, sequential, two-player, zero-sum game. We use two types of game tree search algorithms to solve this problem: minimax search tree and Monte-Carlo search tree. Both search trees can yield the optimal policy but may require possibly exponential computational time and space. We propose several pruning techniques to reduce the computational cost while still preserving optimality guarantees. Simulation results show that the proposed strategy prunes approximately three orders of magnitude nodes as compared to the brute-force strategy. We also find that the Monte-Carlo search tree saves approximately one order of computational time as compared to the minimax search tree.
\end{abstract}
\section{Introduction}
%
Planning for visually covering an environment is a widely studied problem in robots with many real-world applications, such as environmental monitoring~\cite{tokekar2015visibility},  precision farming~\cite{peng2017view}, ship hull inspection~\cite{kim2015active}, and adversarial multi-agent tracking~\cite{hollinger2009efficient}. The goal is typically to find a path for an agent to maximize the area covered within a certain time budget or to minimize the time required to visually cover the entire environment. The latter is known as the Watchman Route Problem (WRP)~\cite{carlsson1999computing} and is closely related to the Art Gallery Problem (AGP)~\cite{o1987art}. The goal in AGP is to find the minimum number of cameras required to see all points in a polygonal environment. 
In this paper, we extend this class of visibility-based coverage problems to adversarial settings.

\begin{figure}
\centering{
\subfigure[Explore an environment.]{\includegraphics[width=0.47\columnwidth]{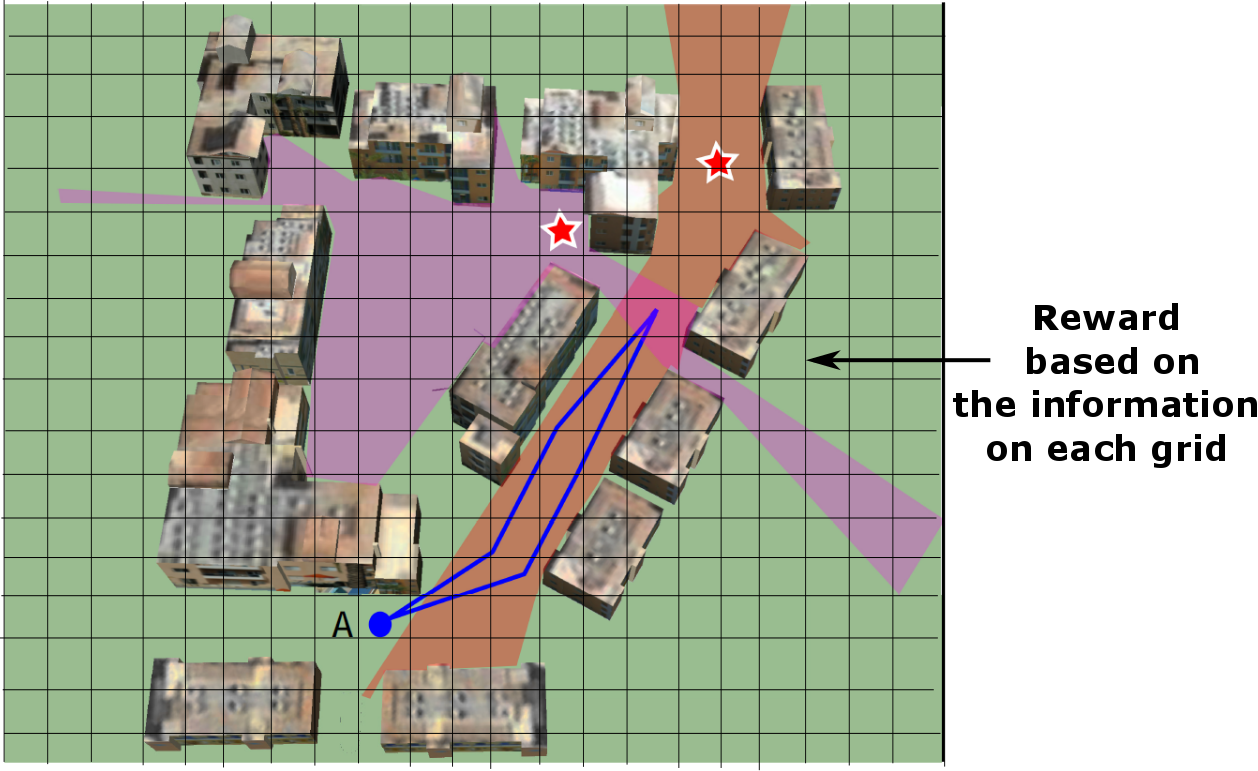}}
\subfigure[Reach a target point.]{\includegraphics[width=0.47\columnwidth]{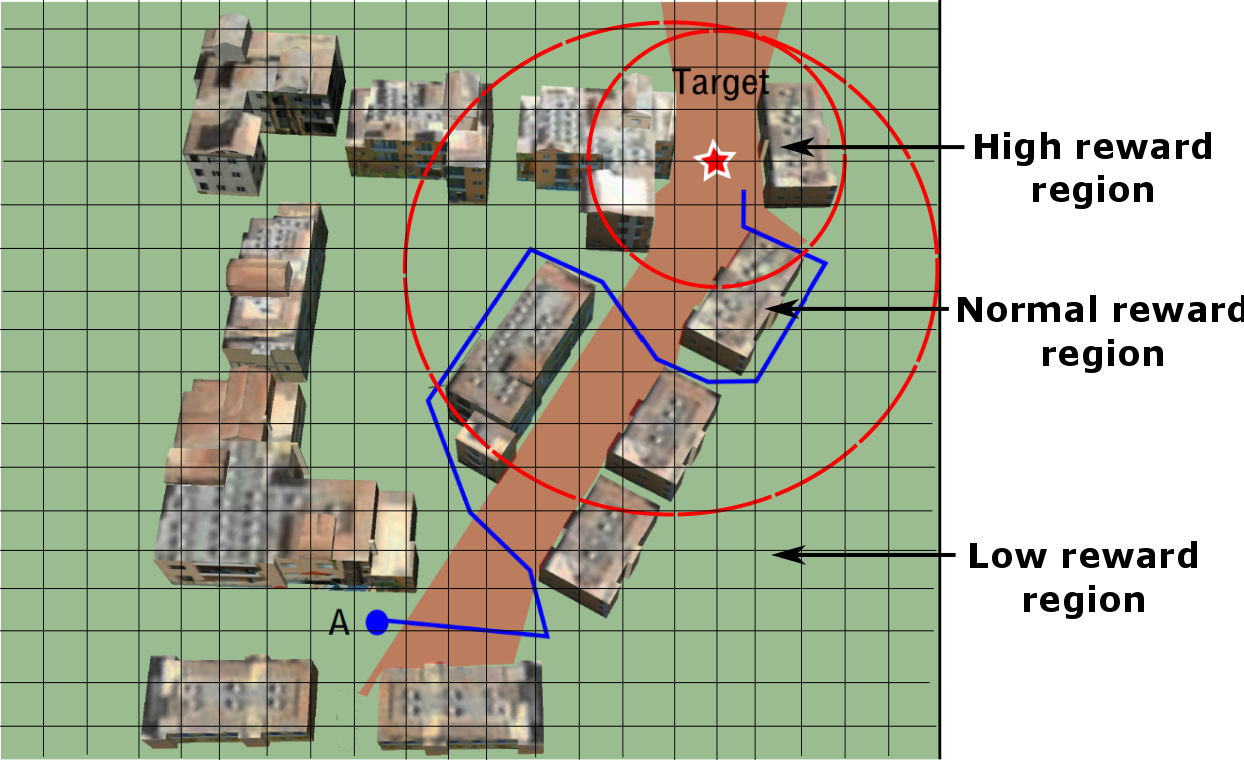}}
}
\caption{Two example missions. Maximizing visibility implies maximizing the total reward collected along a finite horizon path while minimizing detectability can be achieved by avoiding the grid cells from where the agent can be seen. Both types of mission can be formulated by assigning different reward functions over a grid-based map.\label{Simulation_environment}      
}
\end{figure}

We consider scenarios where the environment also contains a guard that is actively (and adversarially) searching for the agent (Figure~\ref{Simulation_environment}). The agent, on the other hand, is tasked with covering the environment while avoiding detection by the guard. This models stealth reconnaissance missions. We consider the version where there is a finite time horizon within which the agent must complete its mission. The objective of the agent is to maximize the total area covered within the given horizon while at the same time minimize the number of times it is detected by the guard.


We adopt a game-theoretic approach for this problem where the agent maximizes the total reward collected and the guard minimizes the total reward. The total reward is a weighted combination of positive and negative rewards. The positive reward depends on the specific task at hand. For example, when the task is to scout an environment (Figure~\ref{Simulation_environment}(a)), the positive reward can be the total area that is scanned by the agent along its path. When the task is to reach a goal position (Figure~\ref{Simulation_environment}(b)), the positive reward can be the function of the distance to the goal. 
The agent receives a negative reward whenever it is detected by the guard. The negative reward can also be defined based on the specific application. In this paper, we consider the case where the agent receives a fixed negative reward every time it is detected by the agent. However, other models (e.g., time-varying negative rewards) can be easily incorporated. The total reward is a combination of the two reward functions.

This problem is a new variant of the classical pursuit-evasion problems~\cite{zhang2016non}. 
Our approach is closer to the visibility-based~\cite{stiffler2017complete, MaBeMuBeHu18} pursuit-evasion games.  However, the main distinction is that in classical pursuit-evasion games, the goal of the evader (\ie, the agent in our setting) is to always evade the pursuer (\ie, the guard) whereas in our setting, the agent has to explore the environment to increase its visibility while at the same time staying away from the guard. 

Broadly speaking, the proposed problem is a combination of classical pursuit-evasion games and visibility-based routing such as the WRP (where the objective is to minimize the time required to observe the environment)~\cite{carlsson1999finding}. Here, we combine the two tasks albeit for discrete environments. Furthermore, the definition of winning a game is different from the conventional pursuit-evasion games. In general, the pursuer wins the game if the distance between the pursuer and evader becomes less than a threshold~\cite{bopardikar2007sensing}, or if the evader is surrounded by the pursuer~\cite{jin2011heuristic}. In our case, however, the goal is not only to capture or to avoid detection by the other player, but also to maximize the area explored, which has not been considered in conventional pursuit-evasion games.

We abstract the underlying geometry and model the problem as a discrete, sequential, two-player, zero-sum game.
Minimax tree search~\cite{gelly2006exploration} and Monte-Carlo Tree Search (MCTS)~\cite{russell2009artificial} are well-known algorithms to solve discrete, two-player, zero-sum games.
Both techniques build a search tree that contains all possible (or a subset of all possible) actions for both players over planning horizons. In general, the search tree will have size that is exponential in the planning horizon. Pruning techniques, such as alpha-beta pruning~\cite{russell2016artificial}, can be employed in order to prune away branches that are guaranteed not to  be part of the optimal policy.
We propose additional pruning techniques (Theorems~\ref{theorem:1}-\ref{theorem:Compare_past}) using the structural properties of the underlying problem to further reduce the computational expense for both the minimax tree search and MCTS. We guarantee that the pruned search tree still contains the optimal policy for the agent.

The contributions of this paper are as follows: (1) We introduce a new problem of minimizing detectability and maximizing visibility as a sequential, two-player, zero-sum game between an agent and a guard; (2) We propose pruning strategies that exploit the characteristics of the proposed problem and that can be applied for both minimax search tree and Monte-Carlo search tree. 


The rest of the paper is organized as follows. We begin by describing the problem setup in Section~\ref{sec:probform}. We present the two tree search techniques in Section~\ref{sec:game} and present the pruning techniques in Section~\ref{sec:pruning}. The simulation results are presented in Section~\ref{sec:sim}. Section~\ref{sec:con} summarizes the paper and outlines future work. 

%
\section{Problem Formulation}~\label{sec:probform}
We consider a grid-based environment where each cell within the environment is associated with a positive reward. Our approach is to formulate the proposed problem by appropriately designing the reward function --- the agent obtains positive rewards for maximizing visibility (depending on the type of missions) and receives negative rewards when detected by the guard. The reward is used to measure both the detectability of a guard and the visibility of an agent.

In an exploration mission, the positive reward can be a function of the number of previously unseen cells visible from the current agent position (Figure~\ref{Simulation_environment}-(a)). In a mission where the objective is to reach a goal position, the positive reward can be defined as a function of the (inverse of the) distance to the guard (Figure~\ref{Simulation_environment}-(b)). The agent receives a negative reward when it is detected by the guard (\ie, when it moves to the same cell as the guard or to a cell that lies with the guard's visibility region). At every turn (\ie, the time step), both the agent and the guard can move to one of their neighboring cells (\ie, the action).


\begin{figure}
\centering{
\subfigure[The case when the agent is detected by the guard.]{\includegraphics[width=0.32\columnwidth]{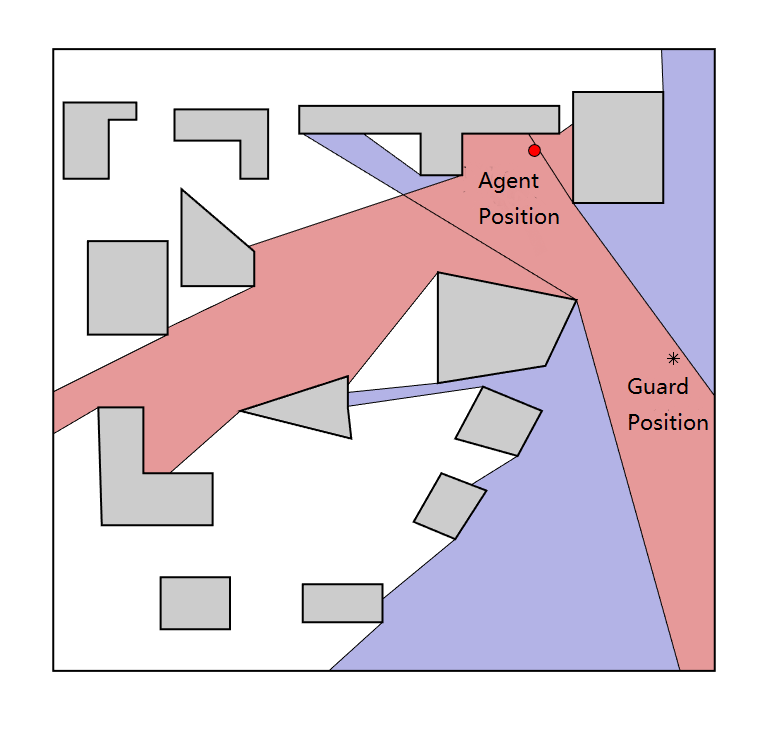}}
\subfigure[The case when the agent is not detected by the guard.]{\includegraphics[width=0.33\columnwidth]{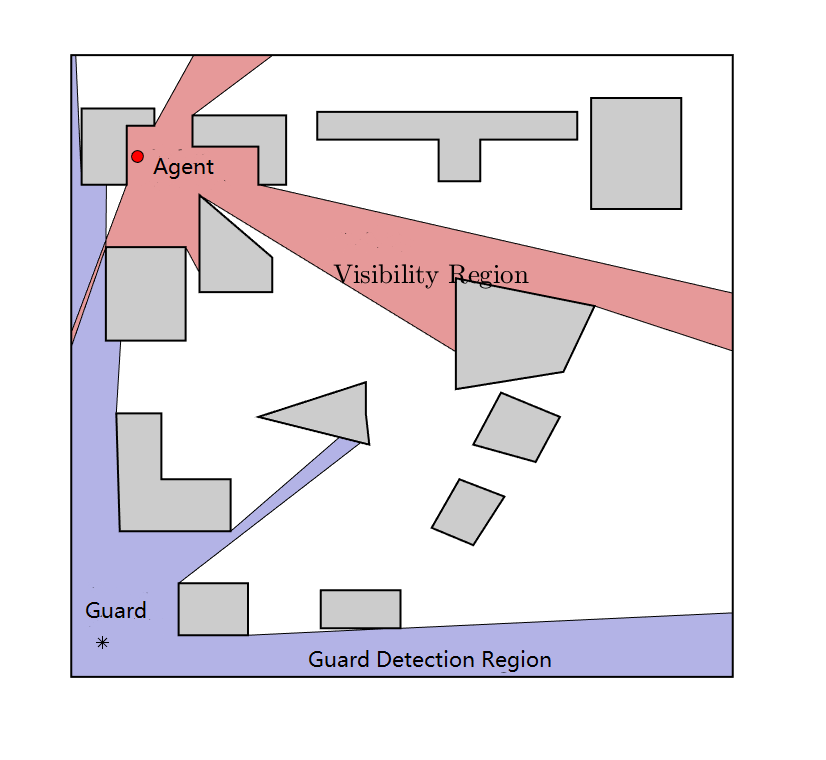}}
\subfigure[The agent and the guard move in a grid-based environment.]{\includegraphics[width=0.3\columnwidth]{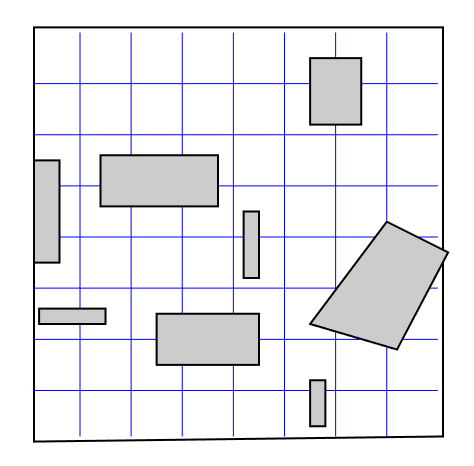}}
}
\caption{A negative penalty will be added if the agent is inside the guard's visibility polygon (\ie, the blue). In a reconnaissance mission, the area of the agent's visibility polygon (\ie, the red) is considered as a positive reward. Both the agent and the guard move in the grid-based environment, as in (c). }
\label{Detected_or_not}         
\end{figure}

We make the following assumptions:
(1) The agent and the guard move in the same grid-based map and can move one edge in one time step.
(2) Both the agent and the guard know the full grid-based map \emph{a priori}.
(3) We assume that the agent and the guard have known sensing ranges (not necessarily the same). In this paper, we assume that both sensing ranges are unlimited for ease of illustration, however, the case of limited sensing range can easily be incorporated. 
(4) The guard has a sensor that can detect the agent when the agent is within its visibility region.
(5) There is no motion uncertainty associated with the agent and guard actions.
(6) The agent is aware of initial position of the guard.

While the last assumption may seem restrictive, there are practical scenarios where it is justified. For example, Bhadauria and Isler~\cite{bhadauria2011capturing} describe a visibility-based pursuit-evasion game where police helicopters can always provide the global positions of the evader to the pursuer that is moving on the ground and may not be able to directly see the pursuer. Thus, even if the guard is not in the field-of-view of the agent, the agent may still know the actual position of the guard by communicating with other (aerial) agents. Note that the agent still does not know where the guard will move next, thereby, making the problem challenging.

In general, the environment could be any discrete environment, not just grids, as long as they satisfy the above requirements. 
In fact, continuous environments can be appropriately discretized such that they satisfy the above assumptions. Medial-axis transformation or skeletonization~\cite{volkov2011environment} and randomized methods such as probabilistic roadmaps~\cite{kavraki1996analysis} and Rapidly-exploring Random Trees (RRTs)~\cite{lavalle2006planning} are common environment discretization techniques. RRTs have been used to solve two-player, zero-sum pursuit evasion games~\cite{karaman2010incremental}. Any such suitable technique can be used. The complexity of the tree search algorithm  will depend on the number of vertices (or grid cells) in a given discretization.

The agent's objective can be written as:
\begin{equation}
\max_{\pi_a(t)} \min_{\pi_g(t)} \left\{ R(\pi_a(t))  - \eta(\pi_a(t),\pi_g(t)) P \right\}.
\label{ob_function}
\end{equation}
On the other hand, the objective of the guard is:
\begin{equation}
 \min_{\pi_g(t)} \max_{\pi_a(t)} \left\{ R(\pi_a(t))  - \eta(\pi_a(t),\pi_g(t)) P \right\},
\end{equation}
where 
$\pi_a(t)$ denotes an agent's path from time step $0$ to $t$. 
$\pi_g(t)$ denotes a guard's path from time step $0$ to $t$. 
$R(\pi_a(t))$ denotes the positive reward collected by the agent along the path from time step $0$ to $t$.  
$P$ is a constant which gives the negative reward for the agent whenever it is detected by the guard.
 $\eta(\pi_a(t),\pi_g(t)) $ indicates the total number of times that the agent is detected from time step $0$ to $t$.
For the rest of the paper, we model $R(\pi_a(t))$ to be the total area that is visible from the agent's path $\pi_a(t)$.



We model this as a discrete, sequential, two-player zero-sum game between the guard and the agent. In the next section, we demonstrate how to find the optimal strategy for this game and explain our proposed pruning methods.

\section{Solution: Two Types of Search Trees }~\label{sec:game}

We refer the agent and the guard as MAX and MIN players, respectively. Even though the agent and the guard move simultaneously, we can model this problem as a turn-based game. At each time step, the agent moves first to maximize the total reward, and then the guard moves to minimize the total reward. This repeats for a total of $T$ planning steps. In this section, we first show how to build a minimax search tree to find the optimal policy. Then, we show how to construct a Monte-Carlo search tree to solve the same problem. The advantage of MCTS is that it finds the optimal policy in lesser computational time than minimax tree --- a finding we corroborate in Section~\ref{sec:sim}.

\subsection{Minimax Tree Search}

A minimax tree search is a commonly used technique for solving two-player zero-sum games~\cite{russell2016artificial}. Each node stores the position of the agent, the position of the guard, the polygon that is visible to the agent along the path from the root node until the current node, and the number of times the guard detects the agent along the path from the root node to the current node. The tree consists of the following types of nodes:
\begin{itemize}
\item \emph{Root node}: The root node contains the initial positions of the agent and the guard. 
\item \emph{MAX level}: The MAX (i.e., agent) level expands the tree by creating a new branch for each neighbor of the agent's position in its parent node from the previous level (which can be either the root node or a MIN level node). The agent's position and its visibility region are updated at each level. The guard's position and the number of times the agent is detected are not updated at this level.
\item \emph{MIN level}: The MIN (i.e., guard) level expands the tree by creating a new branch for each neighbor of the guard's position in its parent node (which is always a MAX level node). The guard's position is updated at each level. The total reward is recalculated at this level based on the agent's and guard's current visibility polygons and the total number of times the agent is detected up to the current level.
\item \emph{Terminal node}: The terminal node is always a MIN level node. When the minimax tree is fully generated (\ie, the agent reaches a finite planning horizon), the reward value of the terminal node can be computed. 
\end{itemize}
The reward values are backpropagated from the terminal node to the root node. The minimax policy chooses an action which maximizes and minimizes the backpropagated reward at the MAX and the MIN nodes, respectively.

\begin{figure}
\centering
\includegraphics[width=0.65\columnwidth]{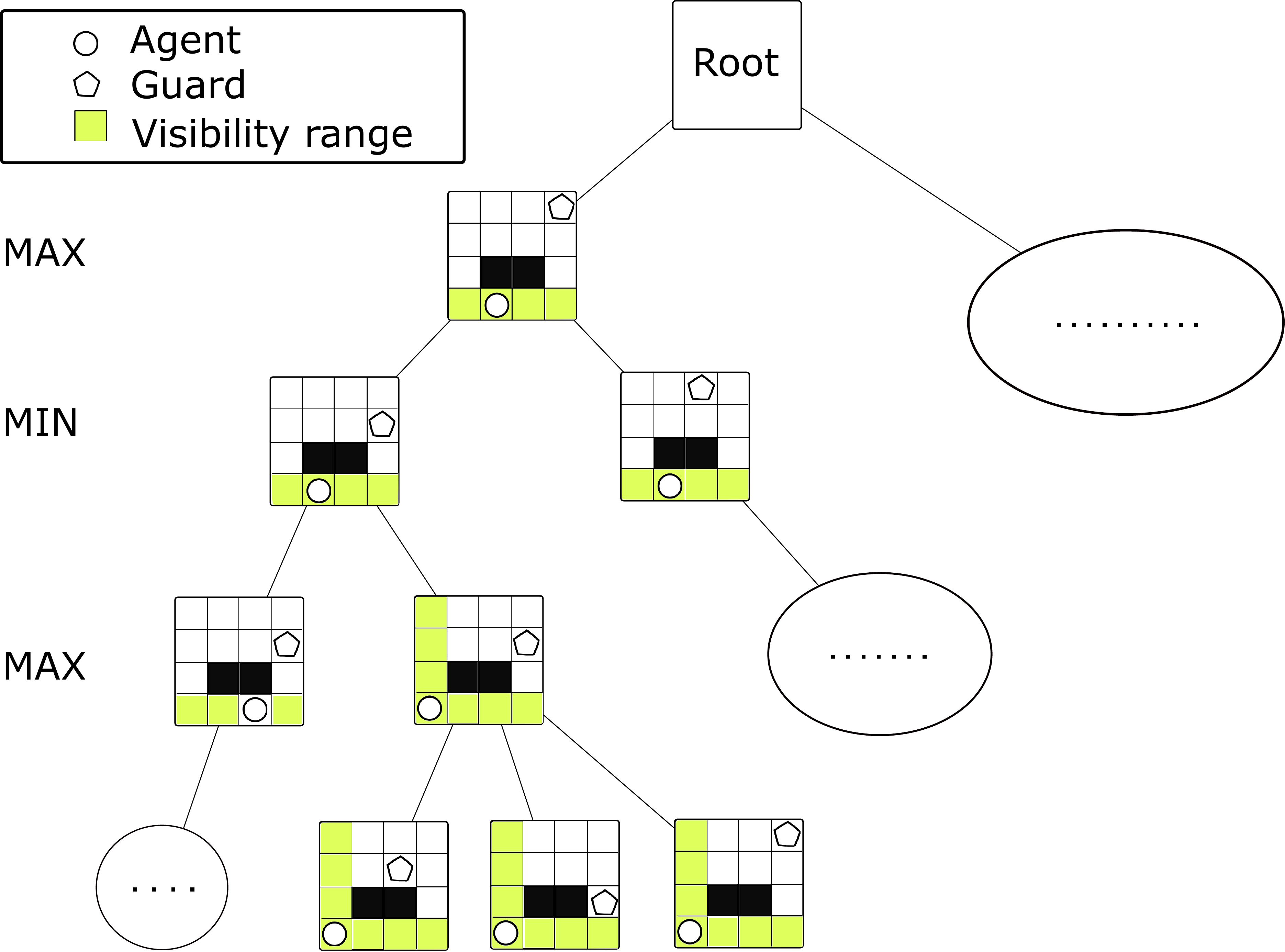}
\caption{A (partial) minimax game tree. The root node contains the initial states of the agent and the guard. Two successive levels of the tree correspond to one time step. The agent moves first to an available position in order to maximize the reward (MAX level). The guard moves subsequently to a neighboring cell to minimize the agent's reward (MIN level).}
\label{tree}
\end{figure}

Figure~\ref{tree} illustrates the steps to build a minimax tree that yields an optimal strategy by enumerating all possible actions for both the agent and the guard. Algorithm~1 presents the algorithm of minimax tree search.

 \begin{algorithm}[h]
 \SetAlgoLined
 \SetKw{Break}{break}
 \SetKwProg{Fn}{function}{ }{end}
 \Fn{$\operatorname{Minimax}(\mathit{node}, \mathit{depth}, \alpha, \beta, \mathit{state})$}
 {
   \uIf{$\mathit{node}$ is a terminal node} {
       \Return{$\mathit{value}$}
   }
   \uElseIf{$\mathit{state}$ is at the agent level}{
       \For{each child $v$ of $\mathit{node}$}
       {
          $V \leftarrow \operatorname{Minimax}(v, \mathit{depth}-1, \alpha, \beta, \mathrm{MIN})$
         
          $\mathit{bestvalue} \leftarrow \max(\mathit{bestvalue}, V)$
         
          $\alpha \leftarrow \max(bestvalue, \alpha)$
         
          \tcp{Alpha-beta pruning}
         
          \If{$\beta \leq \alpha$}{\Break}
          \tcp{Proposed condition}
          \If{pruning condition is true}{\Break}
          \Return{$\mathit{value}$}
       }
   }
   \Else{
       \For{each child v of node}{
          $V \leftarrow \operatorname{Minimax}(v, depth-1, \alpha, \beta, \mathrm{MAX})$\\
          $\mathit{bestvalue} \leftarrow \min(\mathit{bestvalue}, V)$\\
          $\beta \leftarrow \min(\mathit{bestvalue}, \beta)$\\
          \If{$\beta \leq \alpha$}{\Break}
          \If{pruning condition is true}{\Break}
          \Return{$\mathit{value}$}
       }
   }
   $\text{Initial} \leftarrow \left\{S_0\right\}$,Map \\
 $A_r(s),A_t(s) \leftarrow \operatorname{Minimax}(S_0,1,-\infty,\infty,\mathrm{MAX})$\\
 }
 \label{alg:minimax_pruning}
 \caption{ The minimax tree search}
 \end{algorithm}



\subsection{Monte-Carlo Tree Search}
In the naive minimax tree search, the tree is expanded by considering all the neighbors of a leaf node, one-by-one. In MCTS, the tree is expanded by carefully selecting one of the nodes to expand. Which node to select for expansion depends on the current estimate of the value of the node. The value is found by simulating many \emph{rollouts}. In each rollout, we simulate one instance of the game, starting from the selected node, by applying some arbitrary policy for the agent and the guard till the end of the planning horizon, $T$. The total reward collected is stored at the corresponding node. This reward is then used to determine how likely is the node to be chosen for expansion in future iterations. 

\begin{figure}[H]
\centering{
\includegraphics[width=0.99\columnwidth]{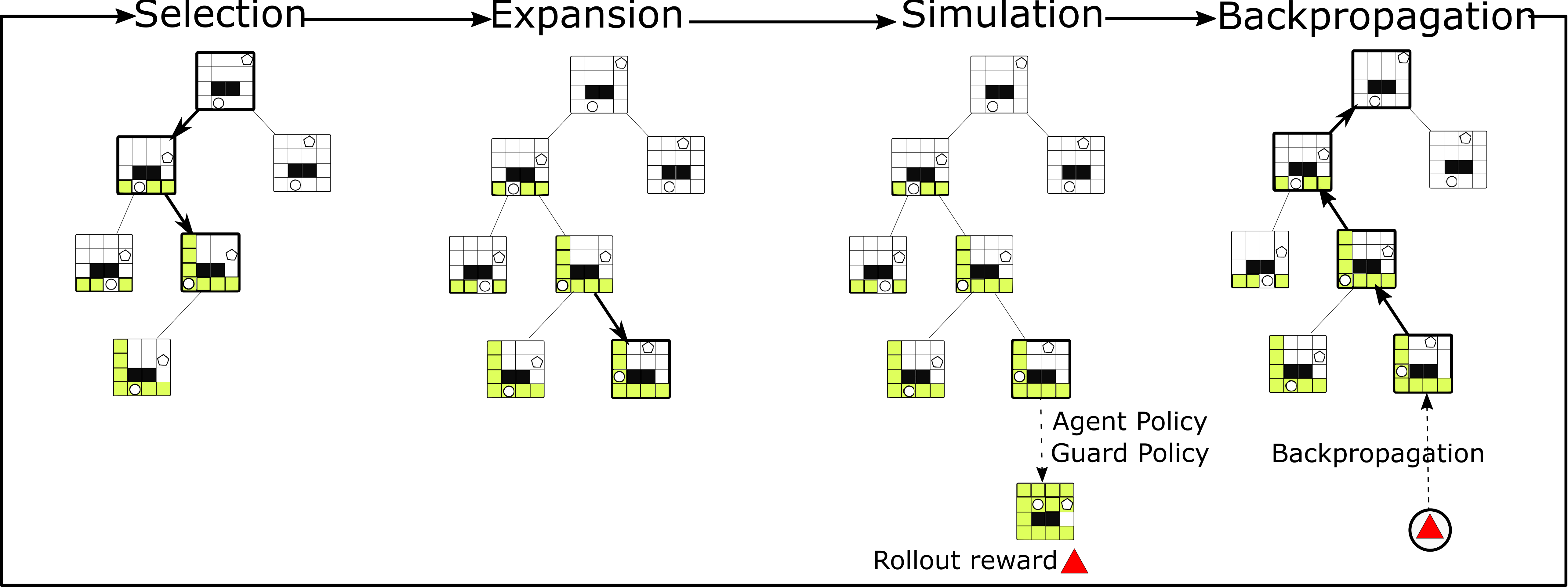}
\caption{Monte-Carlo search tree iteration steps.}
\label{MCST_Simulation_environment} }    
\end{figure}

Each node in the Monte-Carlo search tree stores the total reward value, $Q(v)$, and the number of times the node is visited, $N(v)$. Each iteration of MCTS consists of the following four steps~\cite{chaslot2008monte} (Figure ~\ref{MCST_Simulation_environment}):
\begin{itemize}
\item \textbf{Selection} (Line~\ref{MCTS:line:Selection1} in Algorithm~2): Starting from the root node (in every iteration), the node selection algorithm uses the current reward value to recursively descend through the tree until we reach a node that is not at the terminal level (\ie, corresponding to time $T$) and has children that have never been visited before. We use the Upper Confidence Bound (UCB) to determine which node should be selected (Lines \ref{MCTS:line:UCB_max}--\ref{MCTS:line:UCB_min} in  Algorithm~2). The UCB value takes into account not only the average of the rollout reward obtained but also the number of times the node has been visited. If a node is not visited often, then the second term in the UCB value will be high, improving its likelihood of getting selected. The constant $c$ in Lines \ref{MCTS:line:UCB_max}--\ref{MCTS:line:UCB_min} in  Algorithm~2 is the weighting parameter. At the agent level, we choose the node with the highest UCB value while at the guard level with the lowest UCB value.
\item \textbf{Expansion} (Lines \ref{MCTS:line:Expand1}--\ref{MCTS:line:Expand2} in Algorithm~2): Child nodes (one or more) are added to the selected nodes to expand the tree. If the child node is at the agent level, the node denotes one of the available actions for the agent. If the child node is at the guard level, the node denotes one of the available actions for the guard. Expansion details are given in Algorithm 3.
\item \textbf{Rollout} (Line \ref{MCTS:line:rollout} in Algorithm~2): A Monte-Carlo simulation is carried out from the expanded node for the remaining planning horizon. The agent and the guard follow a uniformly at random policy. Based on this, the total reward for this simulation is calculated. Rollout details are given in Algorithm~4.
\item \textbf{Backpropagation} (Lines \ref{MCTS:line:Backpropagation1}--\ref{MCTS:line:Backpropagation2} in Algorithm~2): The total reward found is then used to update the reward value stored at each of the predecessor nodes.
\end{itemize}


\begin{algorithm}[h]
\SetAlgoLined
\SetKwProg{Fn}{function}{ }{end}
\Fn{$\operatorname{MCTS}(\textit{Tree}, \textit{Initial agent and guard state})$}
{
   Create root node $v_0$ with initial guard and agent state $s_0$\;
   {
   \While{maximum number of iterations not reached}
   {
      \tcp{Selection} 
      $v_i \leftarrow \operatorname{Monte\_Carlo\_Selection}(\textit{Tree},v_0)$ 		       \label{MCTS:line:Selection1}
      
      \tcp{Expand or rollout}

      \eIf{$\operatorname{level}(v_i) = T$ {\bf and} $N(v_i) = 0$ }{
         \tcp{Expand}
         
         $Tree \leftarrow \operatorname{Expand}(Tree,v_i)$
         \label{MCTS:line:Expand1}
         
         \If{Newly added node can be pruned}{\textbf{break}}
         \label{MCTS:line:Expand2}
      }{
         \tcp{Rollout}
         $R \leftarrow \operatorname{Rollout}(v_i)$\;
         \label{MCTS:line:rollout}
      }
      \tcp{Backpropagation}\
     \label{MCTS:line:Backpropagation1} 
}
}
{return $\textit{Tree}$}
}
\caption{Monte-Carlo Tree Search with Pruning}
\label{MCTS} 
\end{algorithm}

\begin{algorithm}[h]
  \SetAlgoLined
\SetKwProg{Fn}{function}{ }{end}
\Fn{$\operatorname{Monte\_Carlo\_Selection}(Tree, v_i)$}
{
 \While{$\operatorname{level}(v_i) \neq \mathrm{TERMINAL}$}{
  \eIf{$\operatorname{level}(v_i) = \mathrm{AGENT}$}{
     $v_i\leftarrow \underset{v'\in \operatorname{children}(v_i)}{\operatorname{arg\,max}} \frac{Q(v')}{N(v')}+c\sqrt{\frac{2\ln N(v')}{N(v')}}$ \label{MCTS:line:UCB_max}
  }
  {
  $v_i\leftarrow \underset{v'\in \operatorname{children}(v_i)}{\operatorname{arg\,min}} \frac{Q(v')}{N(v')}+c\sqrt{\frac{2\ln N(v')}{N(v')}}$ \label{MCTS:line:UCB_min} 
  }
 }
}

\caption{ MCTS selection}
\label{MCTS_selection} 
\end{algorithm}

\begin{algorithm}[h]
  \SetAlgoLined

\SetKwProg{Fn}{function}{ }{end}
\Fn{$\operatorname{Rollout}(v)$}
{
   $R\leftarrow 0$\\
   \While{$\operatorname{level}(v) \neq 2T+1$}
   {
      \eIf{$\operatorname{level}(v) = \mathrm{AGENT}$}
      {
         $v \leftarrow$ choose an agent action at random
      }
      {
         $v \leftarrow$ choose a guard action at random \\
         $R \leftarrow$ update reward
      }
      return $R$
   }
}

\caption{MCTS rollout}
\label{MCTS_rollout} 
\end{algorithm}

Given a sufficient number of iterations, the MCTS with UCB is guaranteed to converge to the optimal policy~\cite{baier2013monte}. However, this may still require building an exponentially sized tree. In the next section, we present a number of pruning strategies to reduce the size of the tree. In Section~\ref{sec:sim}, we also evaluate the effect of the number of iterations on the solution quality.


\section{Pruning Techniques} \label{sec:pruning}

In this section, we present several pruning techniques to reduce the size of the tree and the computational time required to build the minimax and the MCTS. Pruning a node implies that the node will never be expanded (in both types of trees). 
In MCTS, if a node is pruned we simply will break to the next iteration of the search. Pruning the tree results in considerable computational savings which we quantify in Section~\ref{sec:sim}.

In the case of the minimax search tree, we can apply a classical pruning strategy called \emph{alpha-beta pruning}~\cite{russell2009artificial}.
Alpha-beta pruning maintains the minimax values at each node by exploring the tree in a depth-first fashion. It then prunes nodes, if a node is clearly dominated by another. See the textbook by Russell and Norvig~\cite{russell2009artificial} for more details. Alpha-beta pruning is preferable when the tree is built in a depth first fashion. However, we can exploit structural properties of this problem to further prune away nodes without needed to explore a subtree fully. We propose  strategies that find and prune redundant nodes before the terminal level is reached.



 

Our proposed pruning techniques apply for both types of trees. Therefore, in the following we refer to a ``search tree'' instead of specifying whether it is minimax or MCTS.

Our first proposed class of pruning techniques (\ie, Theorems~\ref{theorem:1} and~\ref{theorem:2}) is based on the properties of the given map. Consider the MIN level and the MAX level separately. The main idea of these pruning strategies is to compare two nodes $A$ and $B$ at the same level of the tree, say the MAX level. In the worst case, the node $A$ would obtain no future positive reward while always being detected at each time step of the rest of the horizon. Likewise, in the best case, the node $B$ would collect all the remaining positive reward and never be detected in the future. If the worst-case outcome for node $A$ is still better than the best-case outcome for node $B$, then node $B$ will never be a part of the optimal path. It can thus be pruned away from the search tree. Consequently, we can save time that would be otherwise spent computing all of its successors.  Note that these conditions can be checked even before reaching the terminal node of the subtrees at $A$ or $B$. 

Given a node in the search tree, we denote the remaining positive reward (unscanned region) for this node by $F(\cdot)$. Note that we do not need to know $F(\cdot)$ exactly. Instead, we just need an upper bound on $F(\cdot)$. This can be easily computed since we know the entire map information \emph{a priori}. The total reward collected by the node $A$ and by the node $B$ from time step $0$ to $t$ are denoted by $R^A(t)$ and $R^B(t)$, respectively.

\begin{theorem}
Given a time horizon $T$, let $A$ and $B$ be two sibling nodes in the same MAX level of the search tree at time step $t$. If 
$R^A(t) - (T-t)\eta \ge R^B(t) + F(B)$, then the node $B$ can be pruned without loss of optimality.
 \label{theorem:1}
\end{theorem}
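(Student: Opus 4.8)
The plan is to argue that $B$ is \emph{dominated} by its sibling $A$, so that an optimal agent never needs to descend into the subtree rooted at $B$. Write $V(X)$ for the minimax (game) value of the subtree rooted at a node $X$, i.e.\ the value of the objective~\eqref{ob_function} under optimal play of both players restricted to that subtree. Since $A$ and $B$ are sibling nodes at a MAX level, they share a common parent and therefore correspond to two alternative agent actions taken from the \emph{same} game state (same agent/guard positions, visibility polygon, and detection count). At that parent the agent selects the child of maximum value, so if I can show $V(A)\ge V(B)$, then deleting $B$ cannot change the value propagated to the parent, and hence cannot change the value backpropagated to the root. The whole argument thus reduces to establishing the single inequality $V(A)\ge V(B)$.

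To lower-bound $V(A)$ I would use the fact that a minimax value is sandwiched between the global extrema of the payoff: letting $U$ denote the final total reward of a completed play passing through $A$, we have $V(A)=\max_{\pi_a}\min_{\pi_g} U \ge \min_{\pi_a,\pi_g} U$. I then bound this worst-case payoff using two structural facts. First, the positive visibility reward is monotone and non-negative, so over the remaining $T-t$ steps the future positive contribution is at least $0$. Second, the agent is detected at most once per time step, each detection costing $\eta$, so the future penalty is at most $(T-t)\eta$. Hence every completed play through $A$ ends with value at least $R^A(t)-(T-t)\eta$, giving $V(A)\ge R^A(t)-(T-t)\eta$.

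Symmetrically, to upper-bound $V(B)$ I would use $V(B)=\max_{\pi_a}\min_{\pi_g}U \le \max_{\pi_a,\pi_g}U$. The best possible future at $B$ collects all remaining positive reward and is never detected again; since $F(B)$ is by construction an \emph{upper bound} on the remaining collectable reward and the future penalty is at least $0$, every completed play through $B$ ends with value at most $R^B(t)+F(B)$, so $V(B)\le R^B(t)+F(B)$. Chaining the two estimates with the hypothesis $R^A(t)-(T-t)\eta \ge R^B(t)+F(B)$ then yields $V(A)\ge R^A(t)-(T-t)\eta \ge R^B(t)+F(B)\ge V(B)$, which is exactly the domination required to prune $B$.

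I expect the only delicate points to be two. First, justifying the sandwich inequalities $\min_{\pi_a,\pi_g}U \le \max_{\pi_a}\min_{\pi_g}U \le \max_{\pi_a,\pi_g}U$: these follow from elementary properties of $\max\min$, but they must be phrased against the game-tree semantics (a nested sequence of alternating $\max$ and $\min$ operators over the horizon) rather than a single matrix game. Second, verifying that $A$ and $B$ genuinely share the same residual horizon $T-t$, so that the penalty bound $(T-t)\eta$ and the reward bound $F(\cdot)$ are taken over the same number of remaining steps; this is guaranteed precisely because the two nodes lie at the same level. The monotonicity and non-negativity of the visibility reward and the ``at most one detection per step'' bound are the substantive modelling facts underlying the estimates, and should be recorded explicitly.
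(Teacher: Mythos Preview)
Your proposal is correct and follows essentially the same route as the paper's proof: lower-bound the value at $A$ by its worst-case continuation $R^A(t)-(T-t)\eta$, upper-bound the value at $B$ by its best-case continuation $R^B(t)+F(B)$, and invoke the hypothesis to conclude domination at the MAX level. Your version is simply more explicit about the sandwich $\min_{\pi_a,\pi_g}U\le V(\cdot)\le\max_{\pi_a,\pi_g}U$ and the modelling assumptions (monotone non-negative visibility reward, at most one detection per step) that the paper uses tacitly.
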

\begin{proof}
In the case of the node $A$, the worst case occurs when in the following $T-t$ steps the agent is always detected at every remaining step and collects zero additional positive rewards. After reaching the terminal tree level, the reward backpropagated to node $A$ will be $R^A(t) - (T-t)\eta$. 
For the node $B$, the best case occurs in the following $T-t$ steps when the agent is never detected but obtains all remaining positive rewards. In the terminal tree level, the node $B$ collects the reward of $ R^B(t) + F(B)$.

Since $R^A(t) - (T-t)\eta \ge R^B(t) + F(B)$ and both nodes are at the MAX level, it implies that the reward returned to the node $A$ is always greater than that returned to the node $B$. Therefore, the node $B$ will not be a part of the optimal policy and can be pruned without affecting the optimality.
\end{proof}

Similarly, consider that the node $A$ and the node $B$ are located in the MIN level. The same idea of Theorem~\ref{theorem:1} holds as follows.

\begin{theorem}
Given a time horizon $T$, let $A$ and $B$ be two sibling nodes in the same MIN level of the search tree at time step $t$. If $R^A(t) + F(A)\le R^B(t)-(T-t)\eta $, then the node $B$ can be pruned without loss of optimality.
 \label{theorem:2}
\end{theorem}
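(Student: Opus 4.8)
The plan is to mirror the argument of Theorem~\ref{theorem:1}, taking care that the roles of the \emph{best} and \emph{worst} cases swap because the common parent of $A$ and $B$ is now a minimizing (guard) node rather than a maximizing one. The strategy is again to sandwich the value that can be backpropagated to each of the two siblings between an extremal optimistic and pessimistic scenario, and then to argue dominance from the minimizing player's perspective.

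First I would bound the value backpropagated to node $A$ from above. Over the remaining $T-t$ steps, the most favorable outcome for the agent is that it is never detected again and collects all of the remaining positive reward; since $F(A)$ upper-bounds that remaining reward, the value returned to $A$ is at most $R^A(t) + F(A)$. Next I would bound the value backpropagated to node $B$ from below: the least favorable outcome for the agent is that it is detected at each of the remaining $T-t$ steps, incurring a penalty of $(T-t)\eta$, while gaining no further positive reward, so the value returned to $B$ is at least $R^B(t) - (T-t)\eta$. Both bounds hold regardless of how the agent and the guard play out the rest of the horizon, because each positive contribution is capped by $F(\cdot)$ and each detection penalty is capped by $\eta$ per step.

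Combining these bounds with the hypothesis yields the chain $\mathrm{val}(A) \le R^A(t) + F(A) \le R^B(t) - (T-t)\eta \le \mathrm{val}(B)$, hence $\mathrm{val}(A) \le \mathrm{val}(B)$. Since $A$ and $B$ are siblings whose common parent is a MIN node, the guard always selects the child of smaller value, so it can never prefer $B$ over $A$. Therefore $B$ cannot lie on the optimal path and may be pruned without loss of optimality.

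The main obstacle is conceptual rather than computational: I must verify that the dominance is \emph{robust}, i.e., that the upper bound on $A$ and the lower bound on $B$ are valid simultaneously and independently of the adversary's future choices, so that the minimizing player's selection is forced no matter how the subtrees below $A$ and $B$ unfold. This reduces to confirming that the value of any node at this level is contained in the interval $[\,R(\cdot) - (T-t)\eta,\ R(\cdot) + F(\cdot)\,]$, which follows immediately from the boundedness of the per-step reward and penalty. Once this is in place, the only point requiring attention is that the inequality direction is exactly reversed relative to Theorem~\ref{theorem:1}: at a minimizing level we prune the sibling whose \emph{guaranteed} (lower-bound) value is the larger one, whereas at the maximizing level of Theorem~\ref{theorem:1} we prune the sibling whose optimistic value is the smaller one.
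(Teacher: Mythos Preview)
Your proposal is correct and follows exactly the approach the paper intends: the paper's own proof of Theorem~\ref{theorem:2} consists solely of the remark that it is ``similar to that of Theorem~\ref{theorem:1},'' and your argument is precisely that dual---upper-bounding $A$ by its best case, lower-bounding $B$ by its worst case, and invoking the MIN parent's preference for the smaller value. There is nothing to add.
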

The proof of Theorem~\ref{theorem:2} is similar to that of Theorem~\ref{theorem:1}.
 
The main idea of the second type of pruning strategy (\ie, Theorem~\ref{theorem:Compare_past}) comes from the past path (or history). If two different nodes have the same agent and guard position but one node has a better history than the other, then the other node can be pruned away.

Here, we denote by $S^A(\pi(t))$ and $S^B(\pi(t))$ the total scanned region in the node $A$ and the node $B$ from time step $0$ to $t$, respectively.

\begin{theorem}
Given a time horizon $T$ and $0<t_1<t_2<T$, let the node $A$ be at the level $t_1$ and the node $B$ be at the level $t_2$, respectively such that both nodes are at a MAX level. If (1) the guard's position stored in the nodes $A$ and $B$ are the same, (2) $S^A(\pi(t_1)) \supset S^B(\pi(t_2))$, and (3) $R^A(t) > R^B(t)+(t_2-t_1)\eta$, then the node $B$ can be pruned without loss of optimality.
 \label{theorem:Compare_past}
\end{theorem}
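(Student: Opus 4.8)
The plan is to establish a \emph{dominance} relation: I will show that the minimax value obtainable through node $A$ is at least the minimax value obtainable through node $B$, so that $B$ never needs to be expanded. Following the text preceding the statement, I read condition~(1) as requiring that \emph{both} the agent and the guard occupy the same cells at $A$ and at $B$; together with the fact that the map is fixed and known \emph{a priori}, this makes the set of admissible continuations (agent moves followed by guard moves) from $A$ and from $B$ identical, since the legality of a move depends only on the current positions. This positional isomorphism is what lets me transfer a strategy from one node to the other.

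The steps I would carry out are as follows. First, fix an optimal pair of strategies for the subtree rooted at $B$, let $V(B)$ be its value, and let $\mathrm{cont}$ denote the cells scanned and $d$ the number of detections incurred during the resulting $T-t_2$ step play. Second, from $A$ let the agent \emph{replay} the same sequence of moves for $T-t_2$ steps; because positions coincide at the start and the guard is answered identically, the detections along the replay again total $d$, so $A$'s detection count is $D_A+d$ against $B$'s $D_B+d$. Third, I would invoke condition~(2): since $S^A(\pi(t_1))\supset S^B(\pi(t_2))$, the final scanned region reached from $A$ contains that reached from $B$ (each equals the respective start region unioned with $\mathrm{cont}$), so $A$'s total visibility reward is at least $B$'s by monotonicity of area under union. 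Fourth, I would convert these area and detection differences back into $R^A(t)$ and $R^B(t)$ via $R=\mathrm{area}(\cdot)-\eta(\cdot)$, reducing the comparison to the gap $R^A(t)-R^B(t)$. Finally, since $A$ sits $t_2-t_1$ levels above $B$, the replay leaves the agent with $t_2-t_1$ unused steps; in the worst case these cost at most $(t_2-t_1)\eta$ in penalties and never reduce the positive reward, which is precisely the slack appearing in condition~(3). Combining these, condition~(3) should yield $V(A)\ge V(B)$, and the best-case/worst-case accounting mirrors that used in Theorems~\ref{theorem:1} and~\ref{theorem:2}.

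The part I expect to be delicate is twofold. Because the guard is adversarial, the replay must be phrased as a strategy transfer --- a coupling that responds to each actual guard move --- rather than as a single fixed trajectory; this is routine but must be stated carefully so that the detection counts genuinely coincide. The more substantive obstacle is the interaction between conditions~(2) and~(3): when $A$ replays $B$'s continuation it may re-enter cells of $S^A\setminus S^B$, for which it earns no new reward although $B$ counts them as newly scanned, so the naive bound can lose up to $\mathrm{area}(S^A\setminus S^B)$. Closing the inequality with only the $(t_2-t_1)\eta$ slack of condition~(3) requires arguing that this re-scanning cannot erode $A$'s lead --- for instance by an exchange argument in which $A$ spends the surplus steps scanning genuinely new cells, or by absorbing the loss into the strict superset in~(2) together with the strict inequality in~(3). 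Making this last step airtight is where the real work lies; the positional isomorphism and the extra-time penalty bound are the comparatively easy ingredients.
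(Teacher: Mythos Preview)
Your approach is the same as the paper's: use the positional coincidence at $A$ and $B$ to transfer any continuation from $B$ to $A$, then invoke conditions~(2) and~(3) to conclude that the value returned through $A$ dominates that through $B$. The paper's proof is only a few sentences: it notes that since $A$ and $B$ store the same agent and guard positions, some successor of $A$ reproduces $B$'s configuration, and then simply asserts that conditions~(2) and~(3) force the backpropagated value at $A$ to exceed that at $B$.

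Your proposal is more careful than the paper's own argument, and the difficulty you isolate in your last paragraph --- that replaying $B$'s continuation from $A$ may revisit cells in $S^A\setminus S^B$, so $A$'s area lead can shrink under the union with the continuation region --- is genuine and is \emph{not} addressed in the paper. Concretely, for any continuation $C$ one has $|S^A\cup C|-|S^B\cup C|\le |S^A|-|S^B|$, with strict loss precisely when $C$ meets $S^A\setminus S^B$; condition~(3) bounds $|S^A|-|S^B|$, not the post-union gap, so the $(t_2-t_1)\eta$ slack does not obviously suffice. The paper's proof glosses over exactly this step by assertion, so your caution is warranted; do not expect to find the missing exchange argument in the paper.
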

\begin{proof}
With $0<t_1<t_2<T$, we have the node $B$ appear further down the tree as compared to the node $A$. $S^A(\pi(t_1)) \subseteq S^B(\pi(t_2))$ indicates that the node $A$'s scanned area is a subset of the node $B$'s scanned area.

Since the nodes $A$ and $B$ contain the same guard and agent positions, one of the successors of node $A$ contains the same guard and agent positions as node $B$. Since $R^A(t) \geq R^B(t)+(t_2-t_1)\eta$ and $S^A(\pi(t_1)) \supset S^B(\pi(t_2))$, the value backpropagated from the successor of node $A$ will always be greater than the value backpropagated from the path of node $B$. Furthermore,  more reward can possibly be collected by node $A$ since $S^A(\pi(t_1)) \subseteq S^B(\pi(t_2))$. Thus, the node $B$ will never be a part of the optimal path and can then be pruned away. 
\end{proof}

\section{Evaluation} \label{sec:sim}
In this section, we evaluate the proposed techniques in the context of a reconnaissance mission. We assume the visibility range of the agent and the guard are both unlimited (only restricted by the obstacles in the environment). We use the VisiLibity library~\cite{VisiLibity} to compute the visibility polygon. The simulation is executed in MATLAB.

First, we present two qualitative examples that show the path found by the minimax algorithm. Second, we compare the computational cost of the two search tree algorithms with and without pruning. Third, we study the trade-off between solution quality and computational time by varying the parameters in MCTS. Finally, we show how to apply the tree search technique in an online fashion.

\subsection{Qualitative Examples}

Figures~\ref{High_minimax} and \ref{Low_minimax} show two examples of the policy found by Monte-Carlo tree search method, using high and low negative penalty values ($P$ in Equation~\ref{ob_function}) respectively. 
Both the minimax tree search and MCTS can find the same optimal solution for these instances.
We use a $25 \times 15$ grid environment. With higher negative reward $P=30$, the agent tends to prefer avoiding detection by the guard (Figure~\ref{High_minimax}). With a lower negative reward $P=3$, the agent prefers to explore more area (Figure~\ref{Low_minimax}).

\begin{figure*}[thb]
\centering
\includegraphics[width=0.16\textwidth]{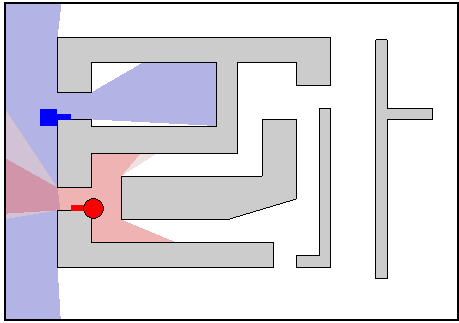}
\includegraphics[width=0.16\textwidth]{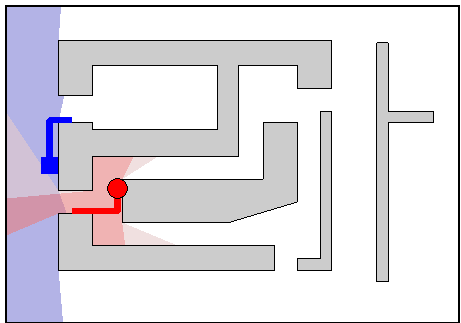}
\includegraphics[width=0.16\textwidth]{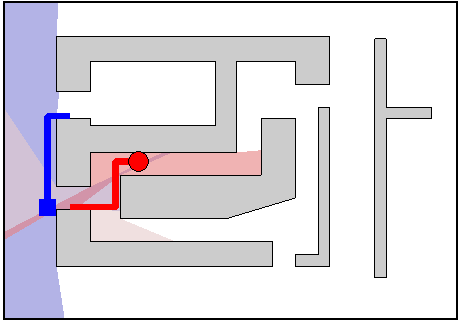}
\includegraphics[width=0.16\textwidth]{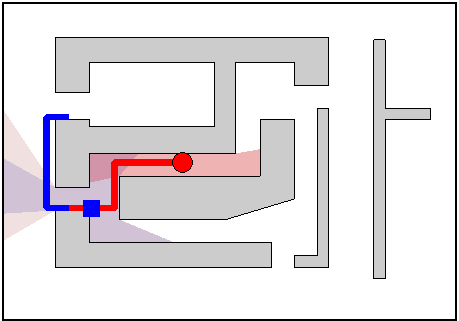}
\includegraphics[width=0.16\textwidth]{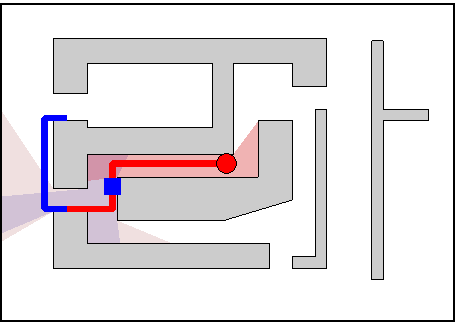}
\caption{Qualitative example (higher penalty $P=30$): Path for the agent (red) and the guard (blue) is given by MCTS for $T=10$. The environment is a $20 \times 15$ grid. With a higher penalty, the agent prefers paths where it can hide from the guard at the expense of the area explored (from left to right, $t=2,4,6,8,10$.). Figure~\ref{Low_minimax} shows the case with a lower penalty. 
}  
\label{High_minimax}         
\end{figure*}

\begin{figure*}[thb]
\centering
\includegraphics[width=0.16\textwidth]{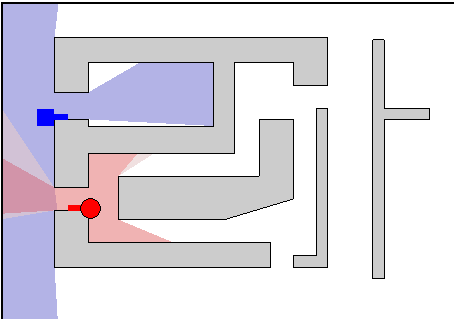}
\includegraphics[width=0.16\textwidth]{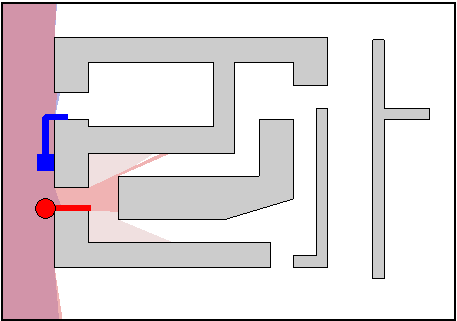}
\includegraphics[width=0.16\textwidth]{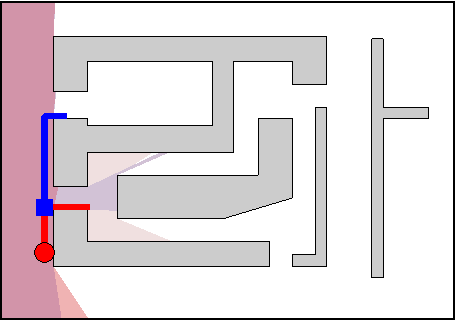}
\includegraphics[width=0.16\textwidth]{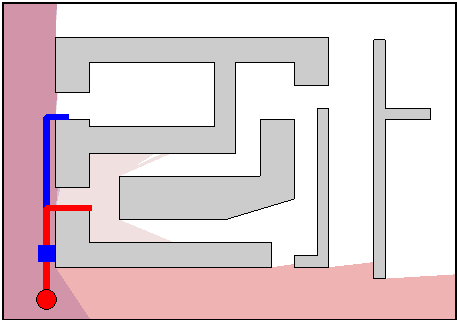}
\includegraphics[width=0.16\textwidth]{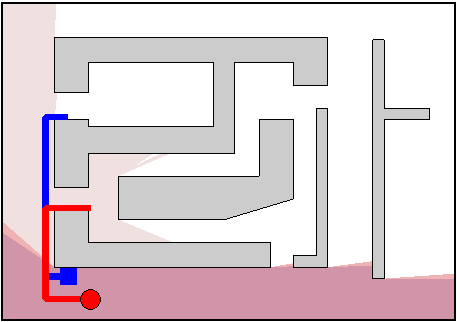}
\caption{Qualitative example (lower penalty $P=3$):  With a lower penalty, path for the agent (red) and the guard (blue) is given by MCTS for $T=10$. The agent prefers paths where it increases the area explored at the expense of being detected often. From left to right, $t=2,4,6,8,10$.
}
\label{Low_minimax}         
\end{figure*}

Both tree search methods give the same optimal solution in both cases (in general, there can be multiple optimal solutions). However, the MCTS finds the optimal solution (for $T=10$) in 40,000 iterations taking a total of approximately 50 minutes. On the other hand, the minimax tree search required approximately 10 hours to find the optimal solution. More thorough comparison is in the next subsection.

\subsection{Computational Time Comparisons}

We evaluate the computational time required to find the optimal solution by varying the time horizon $T$. Figure~\ref{Computation_cost} shows the computational time for the two search algorithms. The time horizon $T$ ranges from 1 to 5; the tree consists of 3 to 11 levels. When the time horizon $T$ is less than 3, the minimax search tree performs better than Monte-Carlo search tree. This can be attributed to the fact that Monte-Carlo search requires a certain minimum number of iterations for the estimated total reward value to converge to the actual one. When the horizon $T$ is increased, the Monte-Carlo search finds the solution faster since it does not typically require generating a full search tree. We only compare up to $T=5$ since beyond this value, we expect Monte-Carlo to be much faster than minimax search tree. Furthermore, the computational time required for finding the optimal solution for the minimax tree beyond $T=5$ is prohibitively large.

\begin{figure}
\centering{
\includegraphics[width=0.65\columnwidth]{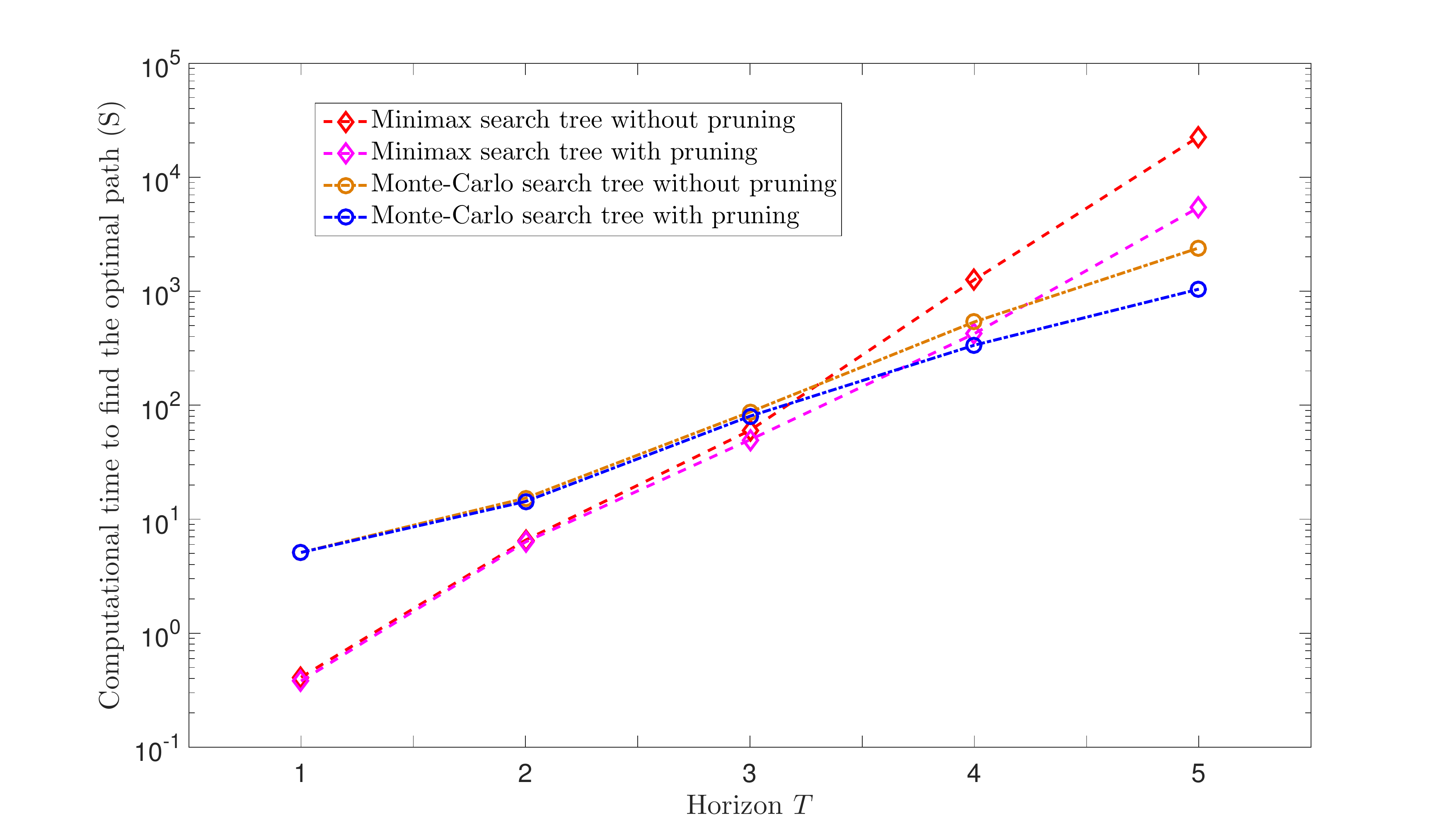}
\caption{Comparison of the time required to find the optimal solution with the minimax tree and the Monte-Carlo tree, with and without pruning. Note that the $y$ axis is in $\log$ scale. 
}
\label{Computation_cost} }    
\end{figure}


Figure~\ref{Computation_cost}, as expected, shows that the computational time with pruning is lower than that without pruning for both techniques. Next, we study this effect in more details.

\paragraph{Minimax Tree.} We show the effectiveness of the pruning algorithm by comparing the number of nodes generated by the brute force technique (no pruning) with the minimax tree with pruning.   We generate the initial position of the agent and the guard randomly. We find the optimal path for various horizons ranging from $T = 2$ to $T = 7$. Therefore, the minimax tree depth ranges from 5 to 15 (if the planning horizon is $T$, then we need a game search tree with $2T+ 1$ level). 

The efficiency of the proposed pruning algorithm is presented in Figure~\ref{fig:Prune_Compare} and Table~\ref{table:Prune_Compare}. Figure~\ref{fig:Prune_Compare} shows the combined effect of all pruning techniques by comparing it the number of nodes without pruning. Table~\ref{table:Prune_Compare} shows the individual effect of alpha-beta pruning and the combined effect of all pruning techniques.

Since the efficiency of pruning is highly dependent on the order in which the neighboring nodes are added to the tree first,  different results can be achieved by changing the order in which the children nodes are added to the minimax tree. Figure~\ref{fig:Prune_Compare}  and Table~\ref{table:Prune_Compare} compare the number of  nodes generated. Figure~\ref{fig:Prune_Compare} shows the effect of all pruning techniques and Table~\ref{table:Prune_Compare} shows the effect of individual pruning techniques.  If we enumerate all the nodes by brute force,  in the worst case, it takes $3.05\times10^8$ nodes to find the optimal path for a horizon of $T=7$. By applying the pruning algorithm, the best case only generates $2.45\times 10^5$ nodes to find the same optimal solution. 

\begin{figure}
\centering
\includegraphics[width=0.65\columnwidth]{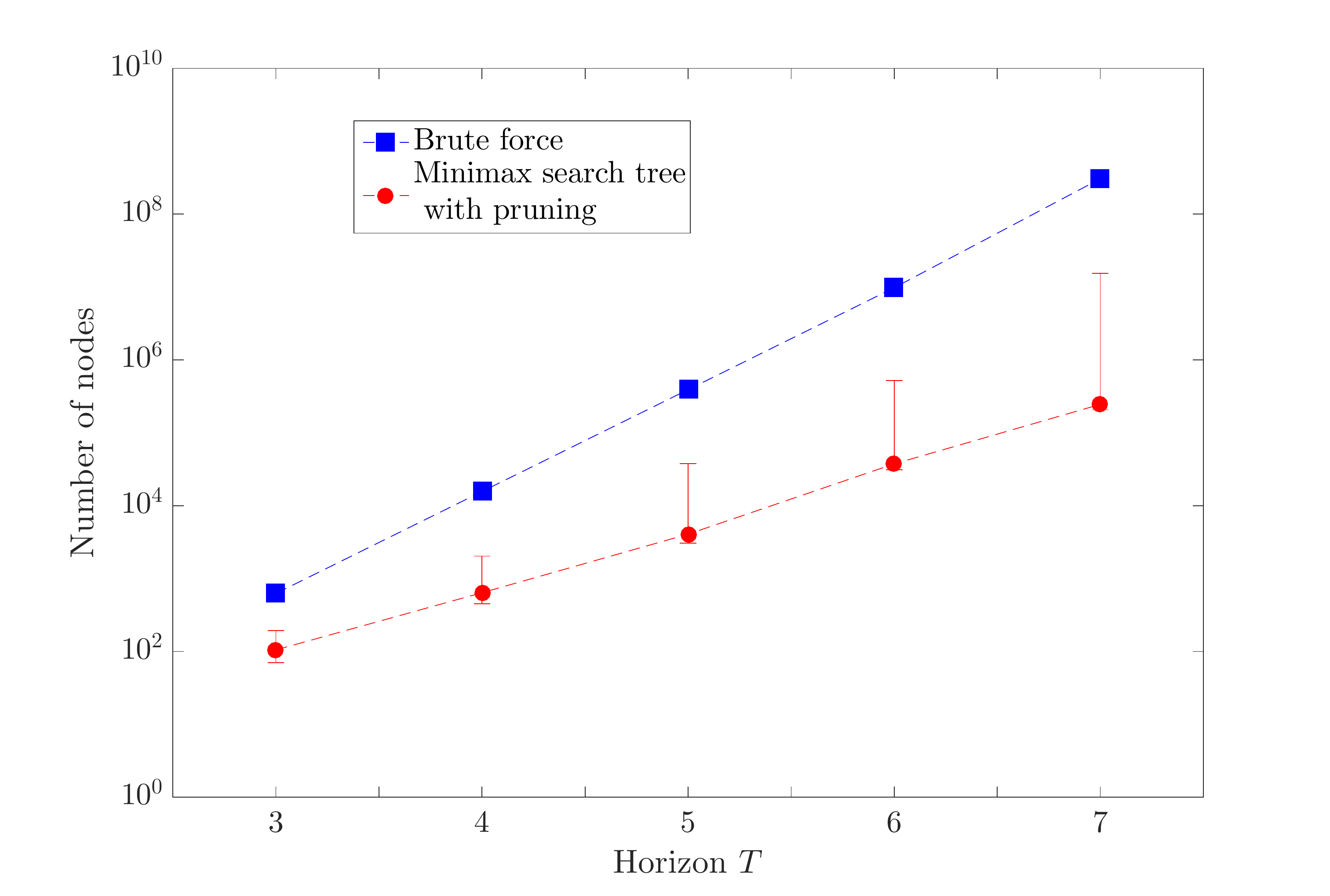}
\caption{Comparison of the number of generated nodes required to find the solution, without pruning (brute force) v.s. with pruning. From $T =3$ to $T = 7$. The error bar gives the maximum, median, and the minimum number of nodes generated (by randomly choosing one of the neighbors to expand the search tree) in 30  trials.  Note in (a), the $y$ axis is in $\log$ scale.}
\label{fig:Prune_Compare}
\end{figure}

\begin{table}[]
\caption{Comparision of the number of nodes generated by different pruning techniques,  From $T= 3$ to $T= 6$.}
\begin{tabular}{|c|c|c|c|c|c|}
\hline
\multicolumn{2}{|c|}{}                                                              & \multicolumn{4}{c|}{\textbf{Number of nodes generated}} \\ \hline
\multicolumn{2}{|c|}{Planning horizon}                                                                             & $T =3$       & $T =4$       & $T =5$      & $T =6$      \\ \hline
\multicolumn{2}{|c|}{Brute force}                                                                  & 625          & 1.56E4       & 3.90E5      & 9.76E6      \\ \hline
\multirow{3}{*}{\begin{tabular}[c]{@{}c@{}}With only\\ alpha-beta\end{tabular}}          & Maximum & 403          & 3844         & 7.08E4      & 1.70E6      \\ \cline{2-6} 
                                                                                         & Median  & 206          & 2822         & 1.80E4      & 2.46E5      \\ \cline{2-6} 
                                                                                         & Minimum & 104          & 1444         & 7860        & 1.86E5      \\ \hline
\multirow{3}{*}{\begin{tabular}[c]{@{}c@{}}With all \\ pruning\\ techniques\end{tabular}} & Maximum & 388          & 1389         & 3.3E4       & 4.81E5      \\ \cline{2-6} 
                                                                                         & Median  & 105          & 639          & 4064        & 3.74E4       \\ \cline{2-6} 
                                                                                         & Minimum & 78           & 563          & 3016        & 2.94E4        \\ \hline
\end{tabular}
\label{table:Prune_Compare}
\end{table}

\paragraph{Monte-Carlo Tree Search.} The minimax tree search method always terminates when it finds the optimal solution. On the other hand, the MCTS terminates after a pre-defined number of iterations. If this number is too low, then it is possible that the MCTS returns a sub-optimal solution.  We study the trade-off between computational time and the number of iterations for the MCTS.

\begin{figure}
\centering{
\includegraphics[width=0.65\columnwidth]{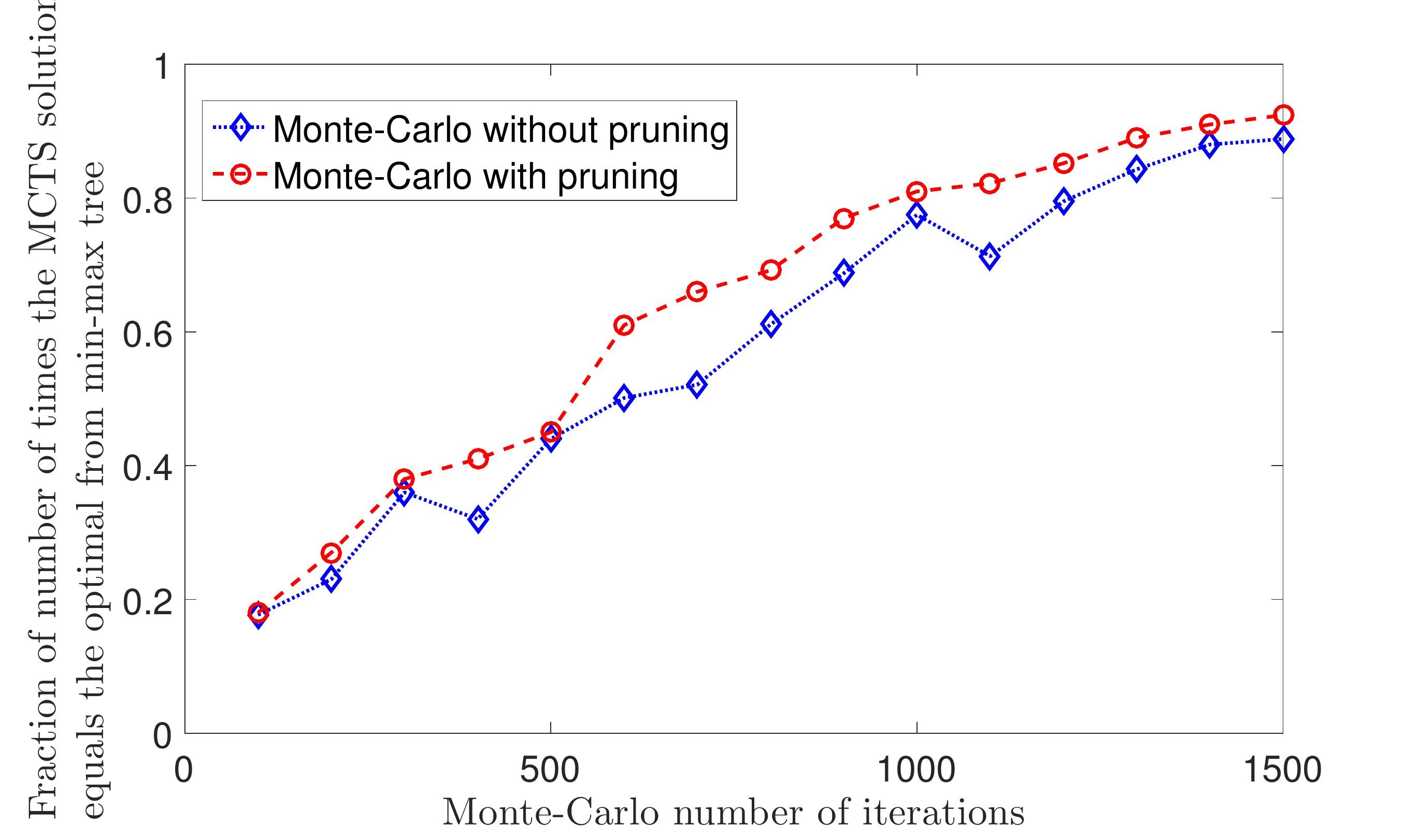}
\caption{Effect of increasing the number of iterations in MCTS, with and without pruning, on the the likelihood of finding the optimal solution. The $y$--axis shows the fraction of the number of trials (out of 50 trials) MCTS was able to find the optimal solution given by the minimax tree for $T=3$.}
\label{rate_best_solution_found} }    
\end{figure}

Figure~\ref{rate_best_solution_found} shows the fraction of the times we find the optimal solution as a function of the number of iterations when $T=3$ in a $10\times 10$ grid map. We first find the optimal solution using a minimax tree. Then, we run the MCTS for a fixed number of iterations and verify if the best solution found has the same value as the optimal. The X-axis in this figure is the number of iterations in MCTS. 

We make the following observations from Figure~\ref{rate_best_solution_found}: (1) The proposed pruning strategy increases the (empirical) likelihood of finding the optimal solution in the same number of iterations; and (2) The probability of finding the optimal solution grows as the number of iterations grows.

\begin{figure}
\centering{
\includegraphics[width=0.65\columnwidth]{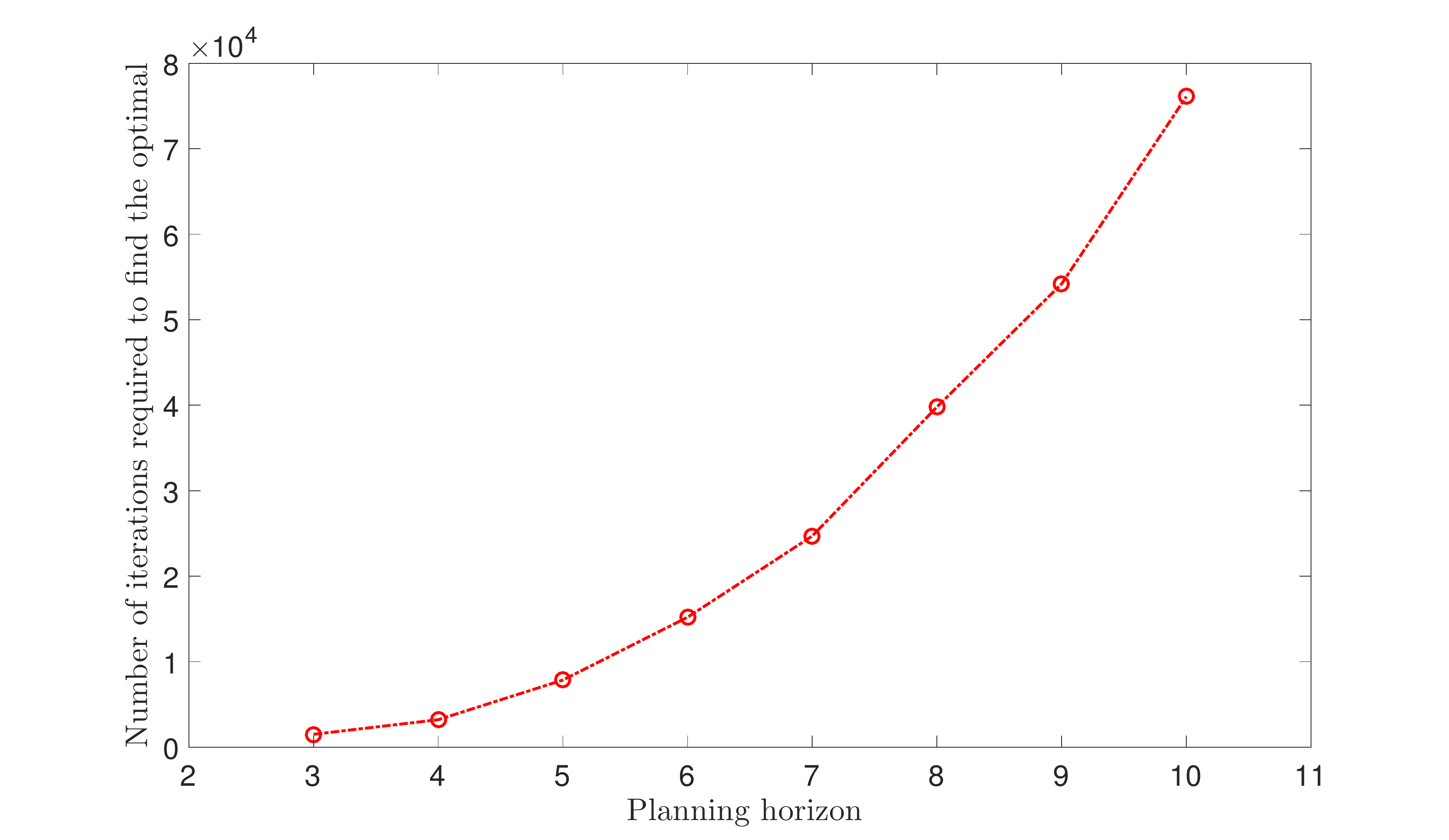}
\caption{Effect of the planning horizon on the number of iterations required to find the optimal solution for MCTS with pruning. }
\label{iterations_over_the_solution_quality} }    
\end{figure}
The number of iterations required to find the optimal solution also depends on the planning horizon. Figure~\ref{iterations_over_the_solution_quality} shows the effect of the planning horizon over the number of iterations required to find the optimal solution. Note that even though the likelihood of finding an optimal solution increases with more iteration times in general, it is always possible that only a suboptimal is found due to ``overfitting'' caused by the UCB selection rule. Therefore, for the following simulations, we run the MCTS multiple times and find out how often we find the optimal solution within a given number of iterations. If we find the optimal solution 80\% or more times, we consider it as success. We find that the number of iterations required to find success 80\% or more times increases exponentially as we vary the planning horizon.

\section{Conclusion and Discussion} \label{sec:con}
We introduce a new problem of maximizing visibility and minimizing detectability in an environment with an adversarial guard. The problem can be solved using a minimax and the MCTS techniques to obtain an optimal strategy for the agent. Our main contribution is a set of pruning techniques that reduce the size of the search tree while still guaranteeing optimality.


Despite the promising reduction in the game tree, the method can still be time consuming when the planning horizon increases or if the environment becomes large and/or complex. Our immediate work is to further reduce the computational effort using MCTS with macro-actions~\cite{lim2011monte}, and by exploiting the underlying geometry of the environment.

\section*{LEGAL}
DISTRIBUTION A. Approved for public release: distribution unlimited. This research was supported in part by the Automotive Research Center (ARC) at the University of Michigan, with funding and support by the Department of Defense under Contract No. W56HZV-14-2-0001.

\bibliographystyle{IEEEtran}
\bibliography{IEEEabrv,main,refs}

\begin{thebibliography}{10}
\providecommand{\url}[1]{#1}
\csname url@rmstyle\endcsname
\providecommand{\newblock}{\relax}
\providecommand{\bibinfo}[2]{#2}
\providecommand\BIBentrySTDinterwordspacing{\spaceskip=0pt\relax}
\providecommand\BIBentryALTinterwordstretchfactor{4}
\providecommand\BIBentryALTinterwordspacing{\spaceskip=\fontdimen2\font plus
\BIBentryALTinterwordstretchfactor\fontdimen3\font minus
  \fontdimen4\font\relax}
\providecommand\BIBforeignlanguage[2]{{%
\expandafter\ifx\csname l@#1\endcsname\relax
\typeout{** WARNING: IEEEtran.bst: No hyphenation pattern has been}%
\typeout{** loaded for the language `#1'. Using the pattern for}%
\typeout{** the default language instead.}%
\else
\language=\csname l@#1\endcsname
\fi
#2}}

\bibitem{tokekar2015visibility}
P.~Tokekar and V.~Kumar, ``Visibility-based persistent monitoring with robot
  teams,'' in \emph{Intelligent Robots and Systems (IROS), 2015 IEEE/RSJ
  International Conference on}.\hskip 1em plus 0.5em minus 0.4em\relax IEEE,
  2015, pp. 3387--3394.

\bibitem{peng2017view}
C.~Peng and V.~Isler, ``View selection with geometric uncertainty modeling,''
  \emph{arXiv preprint arXiv:1704.00085}, 2017.

\bibitem{kim2015active}
A.~Kim and R.~M. Eustice, ``Active visual slam for robotic area coverage:
  Theory and experiment,'' \emph{The International Journal of Robotics
  Research}, vol.~34, no. 4-5, pp. 457--475, 2015.

\bibitem{hollinger2009efficient}
G.~Hollinger, S.~Singh, J.~Djugash, and A.~Kehagias, ``Efficient multi-robot
  search for a moving target,'' \emph{The International Journal of Robotics
  Research}, vol.~28, no.~2, pp. 201--219, 2009.

\bibitem{carlsson1999computing}
S.~Carlsson and B.~J. Nilsson, ``Computing vision points in polygons,''
  \emph{Algorithmica}, vol.~24, no.~1, pp. 50--75, 1999.

\bibitem{o1987art}
J.~O'rourke, \emph{Art gallery theorems and algorithms}.\hskip 1em plus 0.5em
  minus 0.4em\relax Oxford University Press Oxford, 1987.

\bibitem{zhang2016non}
Z.~Zhang and P.~Tokekar, ``Non-myopic target tracking strategies for non-linear
  systems,'' in \emph{Decision and Control (CDC), 2016 IEEE 55th Conference
  on}.\hskip 1em plus 0.5em minus 0.4em\relax IEEE, 2016, pp. 5591--5596.

\bibitem{stiffler2017complete}
N.~M. Stiffler and J.~M. O’Kane, ``Complete and optimal visibility-based
  pursuit-evasion,'' \emph{The International Journal of Robotics Research},
  vol.~36, no.~8, pp. 923--946, 2017.

\bibitem{MaBeMuBeHu18}
V.~Macias, I.~Becerra, R.~Murrieta-Cid, H.~Becerra, and S.~Hutchinson, ``Image
  feedback based optimal control and the value of information in a differential
  game,'' \emph{Automatica}, vol.~90, pp. 271--285, April 2018.

\bibitem{carlsson1999finding}
S.~Carlsson, H.~Jonsson, and B.~J. Nilsson, ``Finding the shortest watchman
  route in a simple polygon,'' \emph{Discrete \& Computational Geometry},
  vol.~22, no.~3, pp. 377--402, 1999.

\bibitem{bopardikar2007sensing}
S.~D. Bopardikar, F.~Bullo, and J.~P. Hespanha, ``Sensing limitations in the
  lion and man problem,'' in \emph{American Control Conference, 2007.
  ACC'07}.\hskip 1em plus 0.5em minus 0.4em\relax IEEE, 2007, pp. 5958--5963.

\bibitem{jin2011heuristic}
S.~Jin and Z.~Qu, ``A heuristic task scheduling for multi-pursuer multi-evader
  games,'' in \emph{Information and Automation (ICIA), 2011 IEEE International
  Conference on}.\hskip 1em plus 0.5em minus 0.4em\relax IEEE, 2011, pp.
  528--533.

\bibitem{gelly2006exploration}
S.~Gelly and Y.~Wang, ``Exploration exploitation in go: Uct for monte-carlo
  go,'' in \emph{NIPS: Neural Information Processing Systems Conference On-line
  trading of Exploration and Exploitation Workshop}, 2006.

\bibitem{russell2009artificial}
S.~Russell and P.~Norvig, \emph{Artificial Intelligence: A Modern
  Approach}.\hskip 1em plus 0.5em minus 0.4em\relax Prentice Hall Press, 2009.

\bibitem{russell2016artificial}
S.~J. Russell and P.~Norvig, \emph{Artificial intelligence: a modern
  approach}.\hskip 1em plus 0.5em minus 0.4em\relax Malaysia; Pearson Education
  Limited,, 2016.

\bibitem{bhadauria2011capturing}
D.~Bhadauria and V.~Isler, ``Capturing an evader in a polygonal environment
  with obstacles.'' in \emph{IJCAI}, 2011, pp. 2054--2059.

\bibitem{volkov2011environment}
M.~Volkov, A.~Cornejo, N.~Lynch, and D.~Rus, ``Environment characterization for
  non-recontaminating frontier-based robotic exploration,'' in
  \emph{International Conference on Principles and Practice of Multi-Agent
  Systems}.\hskip 1em plus 0.5em minus 0.4em\relax Springer, 2011, pp. 19--35.

\bibitem{kavraki1996analysis}
L.~E. Kavraki, M.~N. Kolountzakis, and J.-C. Latombe, ``Analysis of
  probabilistic roadmaps for path planning,'' in \emph{Robotics and Automation,
  1996. Proceedings., 1996 IEEE International Conference on}, vol.~4.\hskip 1em
  plus 0.5em minus 0.4em\relax IEEE, 1996, pp. 3020--3025.

\bibitem{lavalle2006planning}
S.~M. LaValle, \emph{Planning algorithms}.\hskip 1em plus 0.5em minus
  0.4em\relax Cambridge university press, 2006.

\bibitem{karaman2010incremental}
S.~Karaman and E.~Frazzoli, ``Incremental sampling-based algorithms for a class
  of pursuit-evasion games,'' in \emph{Algorithmic foundations of robotics
  IX}.\hskip 1em plus 0.5em minus 0.4em\relax Springer, 2010, pp. 71--87.

\bibitem{chaslot2008monte}
G.~Chaslot, S.~Bakkes, I.~Szita, and P.~Spronck, ``Monte-carlo tree search: A
  new framework for game ai.'' in \emph{AIIDE}, 2008.

\bibitem{baier2013monte}
H.~Baier and M.~H. Winands, ``Monte-carlo tree search and minimax hybrids,'' in
  \emph{Computational Intelligence in Games (CIG), 2013 IEEE Conference
  on}.\hskip 1em plus 0.5em minus 0.4em\relax IEEE, 2013, pp. 1--8.

\bibitem{VisiLibity}
K.~J. Obermeyer and Contributors, ``The visilibity library,''
  \url{https://karlobermeyer.github.io/VisiLibity1/}.

\bibitem{lim2011monte}
Z.~Lim, L.~Sun, and D.~Hsu, ``Monte carlo value iteration with macro-actions,''
  in \emph{Advances in Neural Information Processing Systems}, 2011, pp.
  1287--1295.

\end{thebibliography}

\end{document}